\documentclass[10pt,twocolumn,letterpaper]{article}

\usepackage[pagenumbers]{cvpr} 

\usepackage{amsfonts}
\usepackage{amsmath}
\usepackage{amssymb}

\usepackage{amsthm}
\newtheorem{proposition}{Proposition}
\newtheorem{corollary}{Corollary}

\usepackage{booktabs}
\usepackage{multirow}
\usepackage{graphicx}

\usepackage{pifont}   
\usepackage{xspace}   

\usepackage{makecell}

\usepackage[table,dvipsnames]{xcolor} 
\definecolor{red4}{HTML}{febf92}
\usepackage[labelfont=bf]{caption} 

\newcommand{\icoyes}{\textcolor{ForestGreen}{\ding{51}}\xspace} 
\newcommand{\icono}{\textcolor{Red}{\ding{55}}\xspace}          
\definecolor{myLightPurple}{HTML}{E9E2F2} 

\usepackage{algorithm}
\usepackage{algorithmic}

\usepackage{titletoc}

\definecolor{myLinkPurple}{HTML}{7030A0}
\definecolor{cvprblue}{rgb}{0.21,0.49,0.74}
\usepackage[pagebackref,breaklinks,colorlinks,allcolors=myLinkPurple]{hyperref}

\title{Flux-OPD: On-Policy Distillation with Evolving Contexts}

\author{
\makebox[0pt][c]{
\begin{tabular}{c}
    Yuran Wang\textsuperscript{1,2},
    Zekun Wang\textsuperscript{2},
    Bohan Zeng\textsuperscript{1},
    Ruixu Zhang\textsuperscript{3},
    Wenxuan Liu\textsuperscript{1},
    Liu Yang\textsuperscript{4},\\
    Yifan Dai\textsuperscript{4},
    Yang Shi\textsuperscript{1},
    Bozhou Li\textsuperscript{1},
    Chengzhuo Tong\textsuperscript{1},
    Daili Hua\textsuperscript{1},
    Yuanxing Zhang\textsuperscript{2},
    Wentao Zhang\textsuperscript{1,5,6}
    \thanks{Corresponding Author: wentao.zhang@pku.edu.cn}\\
    \textsuperscript{1}Peking University,
    \textsuperscript{2}Kling Team,
    \textsuperscript{3}Tsinghua University,
    \textsuperscript{4}Shanghai Jiao Tong University,\\
    \textsuperscript{5}Zhongguancun Academy,
    \textsuperscript{6}Beijing Key Laboratory of Data Intelligence and Security (Peking University)
\end{tabular}%
}
}

\begin{document}
\maketitle

\begin{abstract}
Large language model training in open-ended domains lacks verifiable rewards, making task preferences difficult to formalize as effective supervision. Contexts can convey such preferences, yet provide little additional supervision once distilled into the student, motivating contexts that evolve with student performance. However, directly using evolving contexts as in-training supervision results in an unstable distillation target and conflicting distributions, requiring mechanisms to stabilize target and downweight conflicts. In this paper, we analyze \textbf{the effect of contexts} through a decomposition of the reverse KL objective, revealing two findings: the student is distilled toward the geometric mean of context-conditioned teachers, and the objective contains a conflict term that measures conflicts among these teachers. Based on this decomposition, we propose \textbf{Flux-OPD}, an OPD paradigm that uses evolving contexts as in-training supervision to capture task preferences in open-ended domains. Flux-OPD treats the differences between context-conditioned and context-free teachers as contextual difference signals, injects them as contextual corrections into the context-free teacher anchor, and weights their correction strength using the conflict term as an indicator. Experiments on open-ended tasks show that Flux-OPD outperforms existing OPD paradigms, highlighting the potential to combine teacher supervision with evolving contexts.
\end{abstract}

\section{Introduction}

\begin{figure}[t]
    \centering
    \includegraphics[width=1\linewidth]{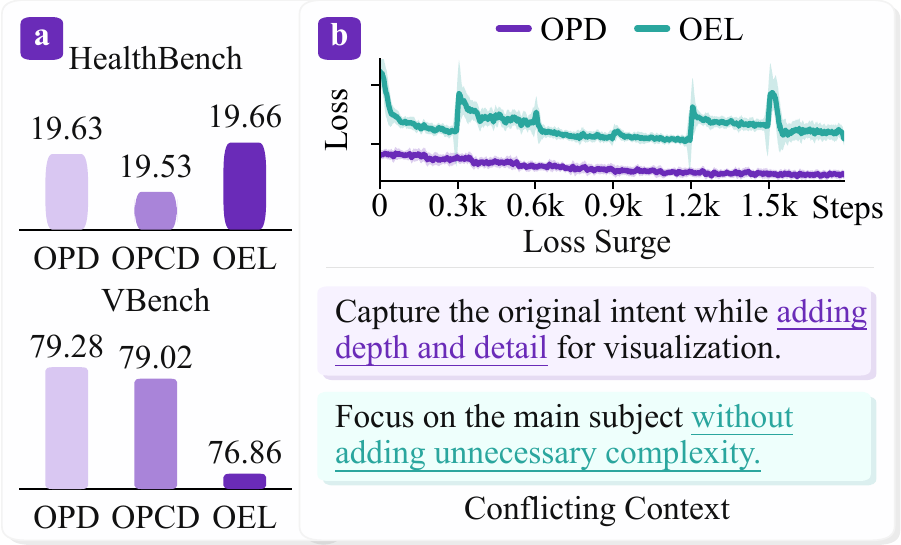}
    \caption{
    \textbf{Motivations.}
    (a) OPCD underperforms OPD on HealthBench, while OEL does not yield consistent gains.
    (b) Evolving contexts cause loss surges, and conflicts among them produce conflicting target distributions.
    }

    \label{fig:teaser}
\end{figure}

Large language models achieve remarkable progress through Reinforcement Learning with Verifiable Rewards in verifiable domains~\cite{shao2024deepseekmath}.
However, open-ended domains often lack verifiable rewards, making it difficult to formalize supervision signals that effectively capture task preferences.
Rubrics-as-Rewards~\cite{gunjal2025rubrics,xiao2026structuring} uses rubrics as supervision but still provides sparse feedback, limiting its effectiveness in capturing complex preference priors~\cite{ye2026llm,lu2025onpolicydistillation,schulman2025lora}.

An alternative is to use contexts as supervision signals.
Context Distillation and On-Policy Context Distillation (OPCD)~\cite{askell2021general,ye2026policy} distill the student toward a context-conditioned teacher, enabling it not only to benefit from dense teacher supervision, as in vanilla On-Policy Distillation (OPD)~\cite{agarwal2024policy}, but also to directly learn behaviors that the teacher exhibits after observing the contexts.
Thus, explicit contexts can convey task preferences that are difficult to formalize in open-ended domains.
However, we find that directly distilling toward the context-conditioned teacher (OPCD) underperforms distilling toward the context-free teacher (OPD), as shown in Fig.~\ref{fig:teaser}.
We attribute this gap to contexts extracted before training: once their information is distilled into student, they contribute little additional supervision.
This motivates us to \textbf{evolve contexts with student performance}, enabling continuous student improvement.

The key challenge is to incorporate evolving contexts as supervision signals throughout training without disrupting optimization.
Online Experiential Learning (OEL)~\cite{ye2026online} alternates between deployment and OPCD training, accumulating experience items from extensive student trajectories as contexts.
To provide in-training supervision, we extend this paradigm to a single training run, using only a small set of trajectories at each update to provide timely feedback on the current student.
However, as shown in Fig.~\ref{fig:teaser}, evolving contexts do not yield consistent gains.
Each update abruptly shifts the target distribution, causing loss surges and destabilizing training.
Moreover, conflicting contexts can produce conflicting target distributions.
These observations motivate us to \textbf{stabilize the distillation target and downweight conflicting distributions}.

To better leverage contexts, we analyze their effect on the reverse KL objective in the OPCD paradigm.
We show that this objective can be equivalently decomposed into \textit{a distillation term} and \textit{a conflict term}.
The former computes the KL divergence between the student and the geometric mean of context-conditioned teacher distributions, while the latter measures conflicts among these teachers.
Conditioned on fixed sampled histories, the decomposition shows that the conflict term contributes no direct gradient, leaving the distillation term as the directly optimized component of the conditional objective.

Based on this decomposition, we propose \textbf{Flux-OPD}, an OPD paradigm that uses evolving contexts as in-training supervision to capture task preferences in open-ended domains.
Flux-OPD is an iterative training process that divides a single training run into multiple iterations, each comprising context extraction and context distillation.
The teacher extracts contexts from student trajectories during context extraction and conditions on these contexts during context distillation.

We design two distillation strategies: \textbf{contextual correction} and \textbf{contextual weighting}.
To stabilize the distillation target, the contextual correction strategy anchors it to the stable context-free teacher and injects contextual difference signals that capture task preferences from evolving contexts, rather than directly treating context-conditioned teachers as distillation targets.
Contextual difference signals are defined as the differences between the context-conditioned and context-free teachers.
To downweight conflicting distributions, the contextual weighting strategy adjusts the correction strength using the conflict term as an indicator, strengthening corrections when context-conditioned distributions are consistent and weakening them when they conflict.

We evaluate our method on two open-ended tasks with real-world applicability: prompt optimization for video generation and medical question answering, both of which involve complex preferences and rely on rich contextual information.
Results show that Flux-OPD outperforms existing OPD paradigms on both tasks across different student-teacher settings.
Our contributions are threefold:
\begin{itemize}
    \item We analyze the effect of contexts through a reverse KL decomposition, showing that the student is distilled toward the geometric mean of the context-conditioned teachers and that the objective contains a conflict term measuring conflicts among them.

    \item We propose Flux-OPD, an OPD paradigm that uses evolving contexts as in-training supervision to capture task preferences in open-ended domains.
    
    \item Experiments on open-ended tasks show that Flux-OPD outperforms existing OPD paradigms, highlighting the potential to combine teacher supervision with evolving contexts as in-training supervision.
\end{itemize}

\section{Related Works}

\subsection{Open-Ended Reward Modeling}
Large language model training in open-ended domains lacks verifiable rewards, making task preferences difficult to formalize as effective supervision.
Reinforcement Learning with Verifiable Rewards (RLVR) compresses these preferences into coarse outcomes.
Rubrics-as-Rewards (RaR)~\cite{gunjal2025rubrics,xiao2026structuring} uses rubrics as structured supervision signals but still provides sparse feedback.
On-Policy Distillation (OPD) trains on trajectories sampled from the current student policy and provides dense teacher supervision~\cite{agarwal2024policy,zhao2026self,self-distillation,fu2026revisiting,gu2024minillm}, but its supervision primarily reflects teacher preferences rather than real task preferences~\cite{gudibande2023false,yang2026self,yang2026learning}.
Flux-OPD complements teacher supervision by using evolving contexts as in-training supervision to capture task preferences.

\subsection{Context Distillation}
Context Distillation~\cite{askell2021general,snell2022learningdistillingcontext} distills a student from a context-conditioned teacher, enabling it to learn behaviors that the teacher exhibits after observing the contexts.
On-Policy Context Distillation (OPCD)~\cite{ye2026policy} further samples from the current student, reducing the mismatch between offline data and the student's generation distribution.
Although contexts can convey task preferences that are difficult to formalize in open-ended domains, contexts fixed before training cannot adapt to the evolving student.
OEL~\cite{ye2026online} alternates between deployment and OPCD training, updating contexts between runs.
Directly extending this paradigm to a single continuous run results in an unstable distillation target and conflicting distributions.
Flux-OPD instead uses evolving contexts as in-training supervision by distilling contextual difference signals into a context-free anchor rather than fully imitating context-conditioned teachers.

\section{Preliminaries}
\label{sec:preliminary}

\subsection{Paradigms}

In vanilla On-Policy Distillation (OPD), given a user prompt $x$, the student $\pi_\theta$ autoregressively generates a response $y=(y_1,\ldots,y_T)$.
Let $h_t=(x,y_{<t})$ denote the decoding history and $\mathcal{V}$ the vocabulary. At step $t$, the student and teacher distributions are defined as
\begin{equation}
\begin{aligned}
p_\theta(v \mid h_t) &= \pi_\theta(y_t = v \mid h_t),\\
q_0(v \mid h_t) &= \pi_{\mathrm{T}}(y_t = v \mid h_t),
\quad v \in \mathcal{V}.
\end{aligned}
\end{equation}

In On-Policy Context Distillation (OPCD), the teacher additionally receives a privileged context $c\sim\mathcal{C}$, while the student observes only $h_t$.
The context-conditioned teacher distribution is
\begin{equation}
q_c(v\mid h_t)=\pi_{\mathrm{T}}(y_t=v\mid h_t,c).
\end{equation}
Training minimizes the reverse KL under on-policy histories:
\begin{equation}
\mathcal{L}(\theta)
=
\mathbb{E}_{h_t}\mathbb{E}_{c\sim\mathcal{C}}
\left[
D_{\mathrm{KL}}
\left(
p_{\theta}(\cdot\mid h_t)
\|
q_c(\cdot\mid h_t)
\right)
\right].
\end{equation}
For simplicity, fixing $h_t$ and omitting the condition gives
\begin{equation}
\mathcal{L}(\theta)
=
\mathbb{E}_{c\sim\mathcal{C}}
\left[
D_{\mathrm{KL}}(p_\theta\|q_c)
\right].
\label{eq:reverse_kl}
\end{equation}

\subsection{KL Decomposition}

For a fixed decoding history $h_t$ at step $t$, let
$p_\theta(v)=p_\theta(v\mid h_t)$ and $q_c(v)=q_c(v\mid h_t)$.
We omit $h_t$ for readability.

\paragraph{Forward KL}
Prior work~\cite{yang2026self} decomposes the forward KL objective into a distillation term toward a marginal teacher and a mutual information term:
\begin{equation}
\mathbb{E}_{c}
\left[
D_{\mathrm{KL}}(q_c\|p_\theta)
\right]
=
\underbrace{
D_{\mathrm{KL}}(q_{\mathrm{mar}}\|p_\theta)
}_{\text{distillation term}}
+
\mathbb{E}_{c}
\left[
D_{\mathrm{KL}}(q_c\|q_{\mathrm{mar}})
\right],
\label{eq:forward_kl_decom_conclusion}
\end{equation}
where the marginal teacher is defined as
\begin{equation}
q_{\mathrm{mar}}(v)
=
\mathbb{E}_{c}[q_c(v)].
\end{equation}
Thus, under forward KL, the student is distilled toward the \textbf{arithmetic mean} of context-conditioned teachers.

\paragraph{Reverse KL}
\label{sec:reverse-kl-decomposition}

We analyze the context effect through the reverse KL objective in Eq.(\ref{eq:reverse_kl}).

\begin{figure}[t]
\centering
\includegraphics[width=1\linewidth]{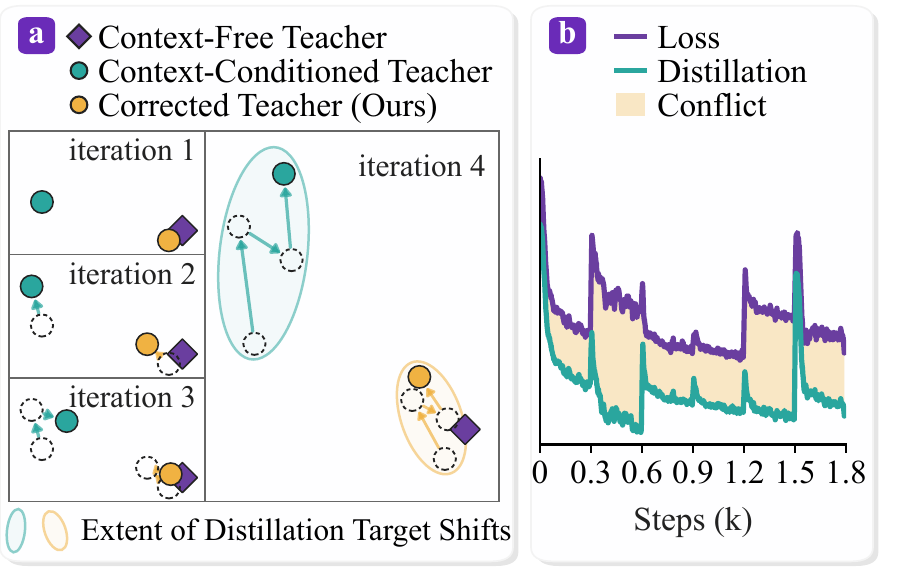}
\caption{
\textbf{Applying the Reverse KL Decomposition to Training with Evolving Contexts.}
At each iteration, the objective decomposes into a distillation term and a conflict term.
(a) Evolving contexts cause shifts in context-conditioned teacher distributions, destabilizing training.
(b) The conflict term captures conflicts among context-conditioned teachers, contributes to the loss, and varies with the sampled histories and contexts rather than through direct optimization.
}
\label{fig:motivation}
\end{figure}

\begin{figure*}[t]
    \centering
    \includegraphics[width=1\linewidth]{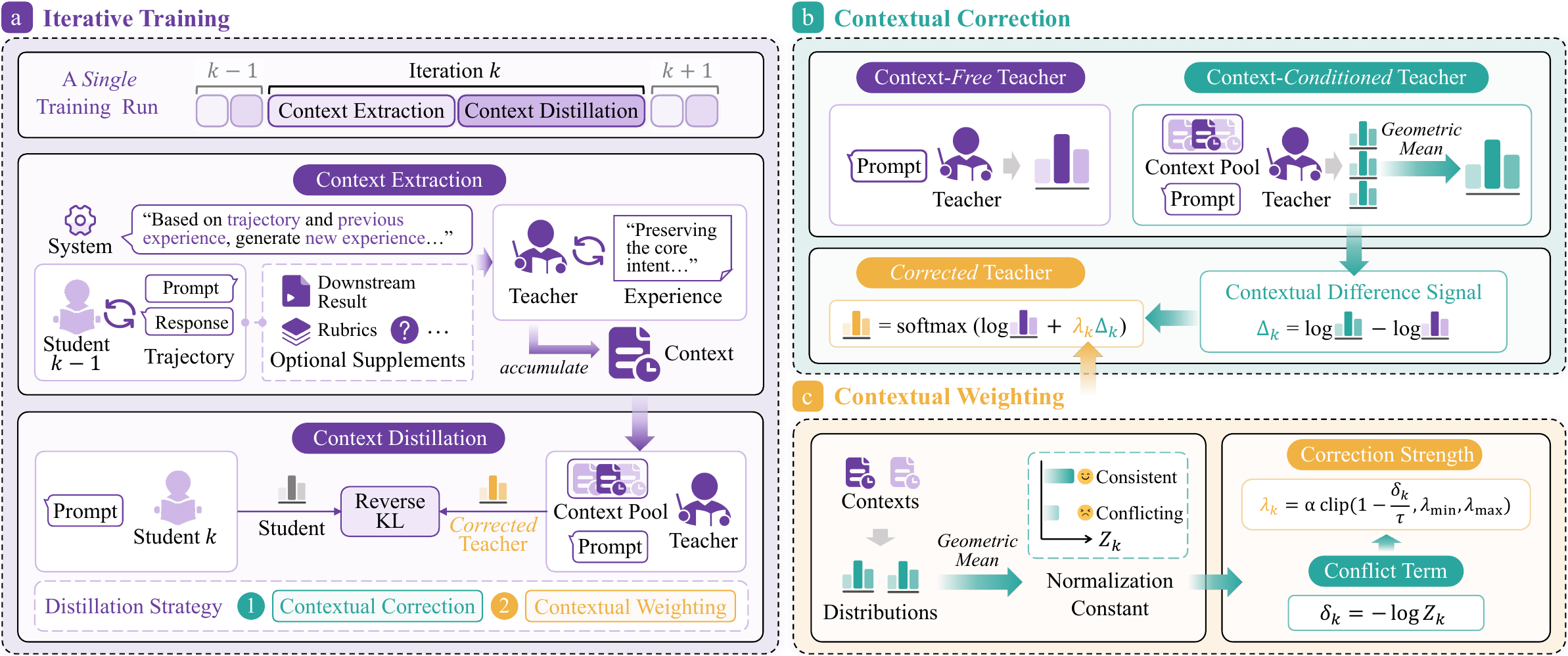}
   \caption{
   \textbf{Overview of Flux-OPD.}
    (a) illustrates the \textit{iterative training} process, which divides a single training run into $K$ iterations, each consisting of context extraction and context distillation.
    (b) illustrates the \textit{contextual correction} strategy, which corrects the distillation target by combining the context-free teacher distribution with contextual difference signals from context-conditioned teachers.
    (c) illustrates the \textit{contextual weighting} strategy, which further adjusts correction strengths using the conflict term as an indicator, strengthening corrections when context-conditioned teacher distributions are consistent and weakening them when they conflict.}
    \label{fig:method}
\end{figure*}

\begin{proposition}[Reverse KL Decomposition]
\label{prop:reverse-kl-decomposition}
For a fixed decoding history, let $p_\theta$ and $\{q_c\}$ be distributions over the same vocabulary $\mathcal{V}$, where $q_c(v)>0$ for every context $c$ and token $v\in\mathcal{V}$.
Let $\mathbb{E}_c$ denote expectation with respect to a fixed context distribution.
Define the normalized geometric mean of context-conditioned teachers and its normalization constant as
\begin{equation}
\begin{aligned}
q_{\mathrm{geo}}(v)
&=
\frac{
\exp\!\left(\mathbb{E}_{c}[\log q_c(v)]\right)
}{Z},\\
Z
&=
\sum_{v \in \mathcal{V}}
\exp\!\left(\mathbb{E}_{c}[\log q_c(v)]\right).
\end{aligned}
\label{eq:geometric-mean}
\end{equation}
Then, the reverse KL objective decomposes as
\begin{equation}
\mathbb{E}_{c}
\left[
D_{\mathrm{KL}}(p_\theta\|q_c)
\right]
=
\underbrace{
D_{\mathrm{KL}}(p_\theta\|q_{\mathrm{geo}})
}_{\text{distillation term}}
+
\underbrace{
(-\log Z)
}_{\text{conflict term}},
\label{eq:reverse_kl_decom_conclusion}
\end{equation}
where $-\log Z\ge 0$.
\end{proposition}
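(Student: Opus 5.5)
The plan is to expand both sides of Eq.~(\ref{eq:reverse_kl_decom_conclusion}) as sums over the finite vocabulary $\mathcal{V}$ and match them term by term. No earlier result is needed beyond the definitions in Eq.~(\ref{eq:geometric-mean}). The positivity assumption $q_c(v)>0$ makes every $\log q_c(v)$ finite. I will also assume the context distribution is such that $\mathbb{E}_c[\log q_c(v)]$ is finite, for instance when it is supported on finitely many contexts, which is the practical case. Then $q_{\mathrm{geo}}$ is a well-defined, strictly positive distribution, $Z\in(0,\infty)$, and both KL divergences are finite. Terms with $p_\theta(v)=0$ are handled by the convention $0\log 0=0$.

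First, I write the left-hand side as
\[
\mathbb{E}_c\Bigl[\sum_v p_\theta(v)\bigl(\log p_\theta(v)-\log q_c(v)\bigr)\Bigr].
\]
Since $p_\theta$ does not depend on $c$ and the sum over $v$ is finite, I exchange $\mathbb{E}_c$ with the sum. This gives
\[
\sum_v p_\theta(v)\log p_\theta(v)-\sum_v p_\theta(v)\,\mathbb{E}_c[\log q_c(v)].
\]

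Next, from the definition, $\mathbb{E}_c[\log q_c(v)]=\log q_{\mathrm{geo}}(v)+\log Z$. Substituting this in and using $\sum_v p_\theta(v)=1$, the expression becomes
\[
\sum_v p_\theta(v)\log\frac{p_\theta(v)}{q_{\mathrm{geo}}(v)}-\log Z = D_{\mathrm{KL}}(p_\theta\|q_{\mathrm{geo}})-\log Z,
\]
which is exactly the claimed identity.

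For the sign claim $-\log Z\ge 0$, i.e.\ $Z\le 1$, I apply Jensen's inequality (the weighted AM--GM inequality) for each fixed $v$:
\[
\exp\bigl(\mathbb{E}_c[\log q_c(v)]\bigr)\le \mathbb{E}_c[q_c(v)],
\]
by concavity of $\log$. Summing over $v$ gives $Z\le\sum_v\mathbb{E}_c[q_c(v)]=\mathbb{E}_c\bigl[\sum_v q_c(v)\bigr]=1$. Equality holds exactly when each $q_c(v)$ is $c$-almost-surely constant, i.e.\ when all context-conditioned teachers coincide. This justifies calling $-\log Z$ a conflict term.

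The only delicate step is the integrability and interchange of expectation and sum when the context distribution is not finitely supported. I would handle this either by assuming finite support or by noting that the summands are bounded below after subtracting $\log p_\theta$ (Tonelli). The algebra itself is routine.
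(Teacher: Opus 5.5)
Your proposal is correct and follows essentially the same route as the paper's proof: expand $\mathbb{E}_c[D_{\mathrm{KL}}(p_\theta\|q_c)]$, move $\mathbb{E}_c$ inside the finite sum, substitute $\mathbb{E}_c[\log q_c(v)]=\log q_{\mathrm{geo}}(v)+\log Z$ using $\sum_v p_\theta(v)=1$, and bound $Z\le\sum_v\mathbb{E}_c[q_c(v)]=1$ by Jensen's inequality. Your additional remarks on integrability and on the equality case ($Z=1$ exactly when the $q_c$ coincide almost surely) are correct refinements that the paper leaves implicit.
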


\begin{proof}
Expanding the reverse KL objective gives
\begin{equation}
\begin{aligned}
\mathcal{L}(\theta)
&=
\mathbb{E}_{c}
\left[
\sum_{v\in\mathcal{V}}
p_\theta(v)\log\frac{p_\theta(v)}{q_c(v)}
\right] \\
&=
\sum_{v\in\mathcal{V}}p_\theta(v)\log p_\theta(v)
-
\sum_{v\in\mathcal{V}}
p_\theta(v)\mathbb{E}_{c}[\log q_c(v)].
\end{aligned}
\label{eq:reverse_kl_decom}
\end{equation}
According to Eq.(\ref{eq:geometric-mean}),
\begin{equation}
\mathbb{E}_{c}[\log q_c(v)]
=
\log q_{\mathrm{geo}}(v)+\log Z.
\end{equation}
Substituting this relation into Eq.~(\ref{eq:reverse_kl_decom}) yields
\begin{equation}
\begin{aligned}
\mathcal{L}(\theta)
&=
\sum_{v\in\mathcal{V}}
p_\theta(v)
\log\frac{p_\theta(v)}{q_{\mathrm{geo}}(v)}
-
\log Z
\sum_{v\in\mathcal{V}}p_\theta(v)\\
&=
D_{\mathrm{KL}}(p_\theta\|q_{\mathrm{geo}})
-\log Z,
\end{aligned}
\end{equation}
which gives Eq.~(\ref{eq:reverse_kl_decom_conclusion}).

Moreover, by Jensen's inequality~\cite{jensen1906fonctions},
\begin{equation}
\begin{gathered}
Z
=
\sum_{v\in\mathcal{V}}
\exp\left(\mathbb{E}_{c}[\log q_c(v)]\right)
\le
\sum_{v\in\mathcal{V}}\mathbb{E}_{c}[q_c(v)]
=
1, \\
-\log Z\ge 0.
\end{gathered}
\end{equation}
Therefore, $-\log Z\ge 0$.
\end{proof}
Under reverse KL, the student is thus distilled toward the \textbf{normalized geometric mean} of context-conditioned teachers, rather than the arithmetic mean under forward KL.
The geometric mean assigns high probability only to tokens that receive consistently high probability across contexts.
Moreover, the conflict term $-\log Z$ measures the distributional disagreement among these teachers: lower distributional overlap decreases $Z$ and increases $-\log Z$.

\begin{corollary}[Gradient Independence of the Conflict Term]
\label{cor:conflict-gradient}
For fixed decoding histories, a fixed context distribution, and fixed context-conditioned teacher distributions, the conflict term $-\log Z$ is independent of the current student token distribution.
Therefore, it contributes no direct gradient with respect to the student parameters through $p_\theta$.
\end{corollary}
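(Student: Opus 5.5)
The plan is to read off the functional dependence of $Z$ directly from its definition in Eq.~(\ref{eq:geometric-mean}). We then combine this with the decomposition of Proposition~\ref{prop:reverse-kl-decomposition}, so that the gradient of the full objective is carried entirely by the distillation term.

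\textbf{Step 1: $Z$ involves only teacher quantities.} By definition,
\[
Z=\sum_{v\in\mathcal{V}}\exp\!\left(\mathbb{E}_c[\log q_c(v)]\right).
\]
This depends on three things: the vocabulary $\mathcal{V}$, the fixed context distribution defining $\mathbb{E}_c$, and the values $q_c(v)=\pi_{\mathrm{T}}(y_t=v\mid h_t,c)$. Each $q_c(v)$ is determined by the teacher, the fixed history $h_t$, and the context $c$. None of these involve $p_\theta$ or $\theta$. Hence, under the stated fixings, $Z$ is a constant with respect to $p_\theta$. The positivity assumption $q_c(v)>0$ guarantees $Z>0$, so $-\log Z$ is a finite constant, and Proposition~\ref{prop:reverse-kl-decomposition} gives $-\log Z\ge 0$.

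\textbf{Step 2: the conflict term has zero gradient.} Since $-\log Z$ is constant in $\theta$, we have $\nabla_\theta(-\log Z)=0$. Applying this to Eq.~(\ref{eq:reverse_kl_decom_conclusion}) yields
\[
\nabla_\theta\,\mathbb{E}_c[D_{\mathrm{KL}}(p_\theta\|q_c)]=\nabla_\theta D_{\mathrm{KL}}(p_\theta\|q_{\mathrm{geo}}).
\]
If one prefers to avoid parameter derivatives, the same conclusion holds for derivatives with respect to the logits or the simplex coordinates of $p_\theta$.

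\textbf{Step 3: the only subtle point.} The care lies in stating exactly what is held fixed. In on-policy training, $h_t$ is sampled from $\pi_\theta$, so $Z(h_t)$ depends on $\theta$ indirectly through the sampling distribution. Evolving contexts likewise change $\mathcal{C}$ across iterations. The corollary conditions on fixed histories and fixed contexts, so these effects are excluded. I would note explicitly that "no direct gradient" means no gradient through $p_\theta$ at the given history. Indirect effects through the distribution of $h_t$ are outside the claim. This is consistent with standard OPD practice, which does not backpropagate through sampling.
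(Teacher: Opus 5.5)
Your proposal is correct and follows the paper's reasoning. The paper gives no separate proof and treats the corollary as immediate from the fact that $Z$ in Eq.~(\ref{eq:geometric-mean}) is built only from the teacher distributions $q_c$ and the fixed context distribution, so by Proposition~\ref{prop:reverse-kl-decomposition} only the distillation term carries gradient through $p_\theta$. Your Step~3 caveat, that histories sampled on-policy and evolving context pools affect $-\log Z$ only indirectly, is the same qualification the paper makes when discussing Fig.~\ref{fig:motivation}(b).
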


Thus, only the distillation term is directly optimized in the conditional objective, and the geometric mean of context-conditioned teachers serves as distillation target.

We apply the fixed-history decomposition above to analyze training with evolving contexts.
Fig.~\ref{fig:motivation}(a) visualizes the mean output embeddings of different distillation targets, showing that context-conditioned teachers undergo large shifts, whereas the context-free teacher remains stable.
Moreover, as shown in Fig.~\ref{fig:motivation}(b), variations in the conflict value reflect changes in the sampled histories and context pools across iterations rather than direct optimization of the conflict term.

\section{Method}
\label{sec:method}

We propose Flux-OPD, an OPD paradigm that uses evolving contexts as in-training supervision to capture task preferences in open-ended domains.
As shown in Fig.~\ref{fig:method}, it is an iterative training process including two distillation strategies: contextual correction and contextual weighting.

\subsection{Iterative Training}
To enable continuous student improvement, we evolve contexts with student performance.
We update contexts within a single training run, using a small set of trajectories at each update to provide timely feedback on the current student.

These evolving contexts makes the training a iterative process that divides a single training run into $K$ iterations, each consisting of context extraction and context distillation.
As shown in Fig.~\ref{fig:method}(a), the teacher extracts contexts from student trajectories during context extraction and conditions on these contexts during context distillation.

\paragraph{Context Extraction}
Following existing context distillation methods~\cite{ye2026policy,ye2026online}, contexts consist of experience items extracted from student interaction trajectories.
At the beginning of iteration $k \in \{1,2,\dots,K\}$, we collect trajectories from the previous student policy $\pi_{\theta_{k-1}}$:
\begin{equation}
\mathcal{D}_{k-1}=\{(x_i,\hat{y}_i)\}_{i=1}^{n},
\end{equation}
where $x_i$ is the user prompt, $\hat{y}_i$ is the student response and optional supplements, such as downstream video result generated from $\hat{y}_i$ in prompt optimization task and rubrics in medical question answering task.
For the first iteration, user prompts are randomly sampled from the entire training set.
For subsequent iterations, they are randomly sampled from the training data seen in the previous iteration.

Given each trajectory, the teacher extracts new experience items that differ from existing ones and accumulate them with previous experience items to form a complete context.
This process is repeated for $M$ turns with different random seeds, yielding a context pool:
\begin{equation}
\mathcal{C}_k=\{c_{k,1},\ldots,c_{k,M}\}.
\label{eq:M}
\end{equation}

\paragraph{Context Distillation}

Contexts sampled from the context pool $\mathcal{C}_k$ condition the teacher, allowing the target distribution to evolve with the current student performance.
Considering the context-free teacher provides a stable anchor for the base optimization target, while the conflict term indicates conflicting context-conditioned teacher supervision, we stabilize distillation through contextual correction and contextual weighting.

\subsection{Contextual Correction}
To stabilize the distillation target, the contextual correction strategy anchors it to the stable context-free teacher and injects contextual difference signals that capture task preferences from evolving contexts, rather than directly using context-conditioned teachers as distillation targets, as shown in Fig.~\ref{fig:method}(b).

For iteration $k$, at each response position, we first compute the
normalized geometric mean of context-conditioned teachers:
\begin{equation}
\begin{aligned}
\log \tilde{q}_{\mathrm{geo},k}(v)
&=
\frac{1}{|\mathcal{C}_k|}
\sum_{c \in \mathcal{C}_k} \log q_c(v),\\
q_{\mathrm{geo},k}(v)
&=
\frac{
\exp\!\left(\log \tilde{q}_{\mathrm{geo},k}(v)\right)
}{Z_k},
\end{aligned}
\end{equation}
where
\begin{equation}
Z_k
=
\sum_{v\in\mathcal{V}}
\exp(\log \tilde{q}_{\mathrm{geo},k}(v)).
\end{equation}

Since the context-conditioned teacher is constructed in log-probability space, we compute the contextual difference signal relative to the context-free teacher $q_0$, as
\begin{equation}
\Delta_k(v)
=
\log q_{\mathrm{geo},k}(v)
-
\log q_0(v).
\end{equation}
The final corrected teacher distribution is then obtained through log-space interpolation:
\begin{equation}
q^{\mathrm{flux}}_k(v)
=
\mathrm{softmax}_{v}
\left(
\log q_0(v)
+
\lambda_k \Delta_k(v)
\right),
\end{equation}
where $\lambda_k\in[0,1]$ controls the strength of contextual correction.
When $\lambda_k=0$, the target reduces to the context-free teacher $q_0$; when $\lambda_k=1$, it recovers the context-conditioned teacher $q_{\mathrm{geo},k}$.
Thus, the context-free teacher $q_0$ provides a stable anchor, while the contextual difference signal $\Delta_k$ injects task preferences derived from contexts.

\subsection{Contextual Weighting}
\label{sec:weighting}
To downweight conflicting distributions, the contextual weighting strategy adjusts the correction strength using the conflict term as an indicator, as shown in Fig.~\ref{fig:method}(c).

From the reverse KL decomposition in Eq.(\ref{eq:reverse_kl_decom_conclusion}), the conflict term measures distributional disagreement among context-conditioned teachers.
Motivated by this observation, we use it as a signal to adjust the correction strength:
\begin{equation}
\delta_k=-\log Z_k,
\end{equation}
where a larger $\delta_k$ indicates lower overlap among context-conditioned teacher distributions at the current decoding position.

We design the weighting mechanism from two perspectives: conflict awareness and strength calibration.
Conflict awareness strengthens corrections when context-conditioned teacher distributions are consistent and weakens them when they conflict through a function that decreases monotonically with $\delta_k$.
Strength calibration adjusts the effective range of correction strengths according to overall context consistency through either scaling or clipping. Scaling globally reduces correction strengths for less consistent contexts, whereas clipping controls the lower and upper bounds of strengths when contexts are more consistent.

Thus, we define the weight using the conflict term as an indicator as follows:
\begin{equation}
\lambda_k
=
\alpha\,
\operatorname{clip}
\left(
1-\frac{\delta_k}{\tau},
\lambda_\text{min},
\lambda_\text{max}
\right),
\label{eq:weight}
\end{equation}
where $\alpha$ is a scaling factor, $\tau>0$ is the conflict threshold, and $0\leq\lambda_{\mathrm{min}}\leq\lambda_{\mathrm{max}}\leq1$ define the clipping bounds.
At each decoding position, the same weight is applied to the contextual correction of all vocabulary logits.
Without strength calibration, we set $\alpha=1$, $\lambda_{\mathrm{min}}=0$ and $\lambda_{\mathrm{max}}=1$.

\subsection{Training Objective}
In summary, Flux-OPD trains the student toward the reformulated target at each iteration by computing the reverse KL divergence between the student distribution and the corrected teacher distribution:
\begin{equation}
\mathcal{L}^{\mathrm{flux}}_k
=
\mathbb{E}_{h_t}
\left[
D_{\mathrm{KL}}
\left(
p_{\theta}(\cdot\mid h_t)
\|
q^{\mathrm{flux}}_k(\cdot\mid h_t)
\right)
\right].
\end{equation}

\begin{table*}[t]
  \centering
  {
  \fontsize{9pt}{\baselineskip}\selectfont
  \rmfamily
  \setlength{\tabcolsep}{1mm}

  \begin{tabular}{
    @{}lcccccccccccccc@{}
  }
    \toprule
    \multirow{3}[3]{*}{\textbf{Method}}
    & \multicolumn{7}{c}{\textbf{VBench~$\boldsymbol{\uparrow}$}}
    & \multicolumn{7}{c}{\textbf{Video-Bench~$\boldsymbol{\uparrow}$}} \\
    \cmidrule(lr){2-8}
    \cmidrule(lr){9-15}

    & \multicolumn{3}{c}{\textbf{Wan2.1-VACE-1.3B}}
    & \multicolumn{3}{c}{\textbf{CogVideoX-2B}}
    & \multirow{2}{*}{\textbf{Average}}
    & \multicolumn{3}{c}{\textbf{Wan2.1-VACE-1.3B}}
    & \multicolumn{3}{c}{\textbf{CogVideoX-2B}}
    & \multirow{2}{*}{\textbf{Average}} \\
    
    \cmidrule(lr){2-4}
    \cmidrule(lr){5-7}
    \cmidrule(lr){9-11}
    \cmidrule(lr){12-14}

    & \textbf{Quality}
    & \textbf{Semantic}
    & \textbf{Total}
    & \textbf{Quality}
    & \textbf{Semantic}
    & \textbf{Total}
    &
    & \textbf{Quality}
    & \textbf{Align}
    & \textbf{Total}
    & \textbf{Quality}
    & \textbf{Align}
    & \textbf{Total}
    & \\
    \midrule

    Original
    & 81.47 & 53.89 & 75.95
    & 77.51 & 63.32 & 74.67
    & 75.31
    & 4.10  & 2.53  & 3.23
    & 3.58   & 2.69   &  3.08
    & 3.16 \\
    \midrule

    Teacher
    & 82.52 & 77.34 & 81.48
    & 78.15 & 75.81 & 77.68
    & 79.58
    & 4.43  & 3.18  & 3.74
    & 3.92   & 3.06   & 3.44 
    & 3.59 \\
    \midrule

    Student
    & \underline{82.48}
    & 77.08
    & \underline{81.40}
    & 78.09
    & \underline{77.51}
    & \underline{77.97}
    & \underline{79.69}
    & 4.47
    & \underline{3.14}
    & 3.73
    & \underline{3.88}   & \underline{3.05}   &  \underline{3.42}
    & \underline{3.58} \\

    ~+ OPD
    & 82.24
    & 76.67
    & 81.13
    & \underline{78.14}
    & 74.54
    & 77.42
    & 79.28
    & 4.47
    & 3.13
    & 3.73
    & 3.82   & 2.95   & 3.34 
    & 3.54 \\

    ~+ OPCD
    & 82.42
    & 76.23
    & 81.18
    & 77.74
    & 73.31
    & 76.85
    & 79.02
    & \underline{4.48}
    & \underline{3.14}
    & \underline{3.74}
    &  3.84  & 3.04   & 3.40
    &  3.57\\

    ~+ OEL
    & 80.98
    & \underline{77.36}
    & 80.26
    & 76.07
    & 63.05
    & 73.46
    & 76.86
    & 4.26
    & 3.06
    & 3.60
    &  3.46  & 2.66   &  3.01
    & 3.31 \\

    \rowcolor{myLightPurple}
    ~+ Flux-OPD
    & \textbf{82.64}
    & \textbf{80.70}
    & \textbf{82.25}
    & \textbf{78.19}
    & \textbf{77.78}
    & \textbf{78.11}
    & \textbf{80.18}
    & \textbf{4.54}
    & \textbf{3.16}
    & \textbf{3.77}
    & \textbf{3.93}   &\textbf{3.08}   &  \textbf{3.45}
    & \textbf{3.61} \\
    \bottomrule
  \end{tabular}
  }
  \caption{\textbf{Results on the Prompt Optimization Task with a Qwen3-VL-Instruct 8B Teacher and 4B Student.} The best and second-best student results are highlighted in bold and underlined, respectively.}

  \label{tab:prompt_optimization}
\end{table*}

\begin{table*}[htbp]
\centering
{
\fontsize{9pt}{\baselineskip}\selectfont
\rmfamily
\setlength{\tabcolsep}{1mm}

\begin{tabular}{@{}lcccccc@{}}
\toprule
\multirow{2}[2]{*}{\textbf{Method}}
& \multicolumn{6}{c}{\textbf{HealthBench}~$\boldsymbol{\uparrow}$} \\
\cmidrule(lr){2-7}

& \textbf{Completeness}
& \makecell{\textbf{Context} \textbf{Awareness}}
& \textbf{Accuracy}
& \makecell{\textbf{Communication} \textbf{Quality}}
& \makecell{\textbf{Instruction} \textbf{Following}}
& \textbf{Total} \\
\midrule

Teacher
& 30.70
& 32.40
& 46.61
& 63.65
& 52.96
& 37.38 \\
\midrule

Student
& 11.34
& 20.14
& 29.40
& 40.84
& \textbf{40.23}
& 19.06 \\

~+ OPD
& \textbf{13.01}
& 21.13
& 29.10
& 39.79
& 38.80
& 19.63 \\

~+ OPCD
& 12.16
& 20.84
& \underline{29.78}
& \underline{41.09}
& 37.06
& 19.53 \\

~+ OEL
& 11.88
& \underline{21.27}
& 29.67
& 41.07
& 38.68
& \underline{19.66} \\

\rowcolor{myLightPurple}
~+ Flux-OPD
& \underline{12.86}
& \textbf{21.30}
& \textbf{31.50}
& \textbf{41.99}
& \underline{39.67}
& \textbf{20.61} \\
\bottomrule
\end{tabular}
}

\caption{\textbf{Results on the Medical Question Answering Task with a Qwen3 8B Teacher and 1.7B Student.}
The best and second-best student results are highlighted in bold and underlined, respectively.}
\label{tab:medicine}
\end{table*}

\section{Experiments}

\subsection{Implementation Details}

\paragraph{Models}
We conduct experiments under different student-teacher settings: Qwen3-VL-Instruct~\cite{yang2025qwen3} with a 4B student and an 8B teacher, Qwen2.5-VL-Instruct~\cite{bai2025qwen2} with a 7B student and a 32B teacher, and Qwen3 with a 1.7B student and an 8B teacher.
For each setting, the student is initialized from the SFT model trained on offline teacher-distilled outputs, while the teacher is the base model.

\paragraph{Baselines}
We compare Flux-OPD with several OPD paradigms: (1) OPD~\cite{agarwal2024policy}, which distills teacher supervision on trajectories sampled from the current student policy; (2) OPCD~\cite{ye2026policy}, which conditions the teacher on privileged experience items as contexts; and (3) OEL~\cite{ye2026online}, which equips the teacher with evolving contexts for continuous student improvement.
All baselines are implemented in a single-run training setting.

\paragraph{Tasks}
We focus on two open-ended tasks with real-world applicability: prompt optimization for video generation and medical question answering, both of which involve complex preferences and rely on rich contextual information.
Prompt optimization rewrites imperfect user prompts into generator-aligned prompts, requiring not only text-level teacher supervision but also contexts that convey video-level preferences of a specific generator.
Medical question answering spans diverse user queries and clinical workflows, where contexts help capture domain knowledge and interaction preferences.

\paragraph{Training Details}
(1) Datasets.
For prompt optimization, we use the 10K SFT prompts from VPO~\cite{cheng2025vpo}.
For medical question answering, we use RaR-Medicine~\cite{gunjal2025rubrics}, which contains approximately 18K questions paired with rubrics.
(2) Context Extraction.
Each context is extracted from $8$ student rollouts.
At each iteration, we extract contexts using $M=3$ random seeds.
For prompt optimization, we additionally generate downstream videos from student responses as supplements.
For medical question answering, we use the paired rubrics as supplements.
(3) Context Distillation.
The OPCD and OEL baselines use the same initial contexts as Flux-OPD.
For OEL and Flux-OPD, contexts are updated every $300$  steps.

\begin{table}[t]
  \centering

  {
  \fontsize{9pt}{\baselineskip}\selectfont
  \rmfamily
  \setlength{\tabcolsep}{1mm}

  \begin{tabular}{@{}lccccccc@{}}
    \toprule
    \multirow{3}[2]{*}{\textbf{Method}}
    & \multicolumn{7}{c}{\textbf{VBench}~$\boldsymbol{\uparrow}$} \\

    \cmidrule(lr){2-8}
    & \multicolumn{3}{c}{\textbf{Wan2.1-VACE-1.3B}} 
    & \multicolumn{3}{c}{\textbf{CogVideoX-2B}}
    & \multirow{2}[2]{*}{\textbf{Avg.}} \\
    
    \cmidrule(lr){2-4}\cmidrule(lr){5-7}
    & \textbf{Qua.}
    & \textbf{Sem.}
    & \textbf{Total}
    & \textbf{Qua.}
    & \textbf{Sem.}
    & \textbf{Total}
    &
    \\
    \midrule

    Original
    & 81.47 & 53.89 & 75.95
    & 77.51 & 63.32 & 74.67
    & 75.31
    \\
    \midrule

    Teacher
    & 82.53
    & 78.39
    & 81.70
    & 78.79
    & 79.21
    & 78.87
    & 80.29
    \\
    \midrule

    Student
    & 82.29
    & 77.76
    & 81.38
    & 77.66
    & \textbf{78.53}
    & \underline{77.83}
    & 79.61
    \\

    ~+ OPD
    & \underline{82.54}
    & 79.87
    & \underline{82.00}
    & 77.72
    & \underline{78.02}
    & 77.78
    & \underline{79.89}
    \\

    ~+ OPCD
    & 82.10
    & 79.15
    & 81.51
    & \underline{78.22}
    & 76.02
    & 77.78
    & 79.65
    \\

    ~+ OEL
    & 82.12
    & \textbf{80.08}
    & 81.71
    & 78.12
    & 76.44
    & 77.78
    & 79.75
    \\
    
    \rowcolor{myLightPurple}
    ~+ Flux-OPD
    & \textbf{82.82}
    & \underline{80.01}
    & \textbf{82.26}
    & \textbf{78.51}
    & 77.01
    & \textbf{78.21}
    & \textbf{80.24}
    \\
    \bottomrule
  \end{tabular}
  }

\caption{\textbf{Results on the Prompt Optimization Task with a Qwen2.5-VL-Instruct 32B Teacher and 7B Student.}
The best and second-best student results are highlighted in bold and underlined, respectively.
Qua.: Quality; Sem.: Semantic.
}

  \label{tab:prompt_optimization_7b}
\end{table}

\begin{figure}[t]
    \centering
    \includegraphics[width=1\linewidth]{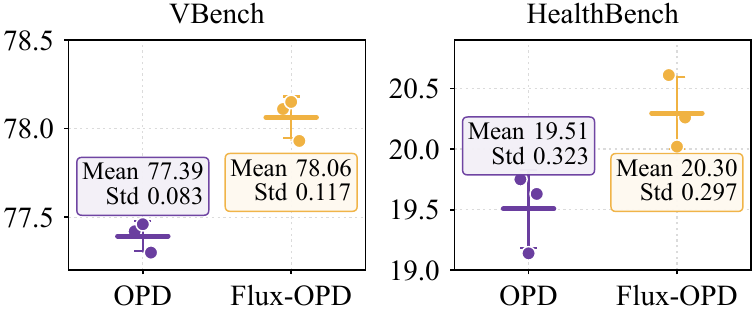}
    \caption{\textbf{Results across Three Independent Training Runs.} Flux-OPD consistently outperforms OPD across all runs.}
    \label{fig:variance}
\end{figure}

\paragraph{Evaluation Settings}
(1) Prompt Optimization.
We evaluate all methods using two downstream video generators from different model families, Wan2.1-VACE-1.3B and CogVideoX-2B~\cite{jiang2025vace,yang2025cogvideox}, on two video generation benchmarks: VBench and Video-Bench~\cite{huang2024vbench,han2025video}.
VBench reports independent scores based on automatic metrics, whereas Video-Bench computes independent scores using GPT-4o.
For VBench, we evaluate the 10 dimensions reported in prior works~\cite{cheng2025vpo,ji2025prompt}.
(2) Medical Question Answering.
We evaluate on HealthBench~\cite{arora2025healthbench}, which contains 5K clinical conversations to assess model safety and helpfulness in realistic medical scenarios.

\subsection{Main Results}

We train each baseline once and use the same seed across methods for each evaluation prompt.
As shown in Tabs.~\ref{tab:prompt_optimization},~\ref{tab:medicine}, and~\ref{tab:prompt_optimization_7b}, Flux-OPD achieves the best total scores across different student-teacher settings on two open-ended tasks.
In medical question answering, where task preferences are primarily text-level, OPD improves over the initial policy and ranks second only to Flux-OPD.
In contrast, in prompt optimization, where video-level feedback reflects real task preferences, OPD degrades the initial student policy.
OPCD and OEL remain limited in leveraging contexts, and both degrade the initial policy on prompt optimization.

Moreover, we conduct three independent training runs for OPD and Flux-OPD.
As shown in Fig.~\ref{fig:variance}, Flux-OPD consistently outperforms OPD on VBench with CogVideoX-2B and on HealthBench.

\begin{table}[t]
  \centering

  {
  \fontsize{9pt}{\baselineskip}\selectfont
  \rmfamily
  \setlength{\tabcolsep}{1mm}

  \begin{tabular}{@{}lccccc@{}}
    \toprule
    \textbf{Variant}
    & \makecell{\textbf{Evolving}\\\textbf{Contexts}}
    & \makecell{\textbf{Contextual}\\\textbf{Correction}}
    & \makecell{\textbf{Contextual}\\\textbf{Weighting}}
    & $\boldsymbol{\lambda_k}$
    & \textbf{Score} \\
    \midrule

    V1
    & \icono
    & \icoyes
    & \icoyes
    & Eq.~(\ref{eq:weight})
    & 19.63 \\
    \midrule

    OEL
    & \icoyes
    & \icono
    & \icono
    & $\equiv 1.0$
    & 19.66 \\
    \midrule

    \multirow{3}{*}{V2}
    & \multirow{3}{*}{\icoyes}
    & \multirow{3}{*}{\icoyes}
    & \multirow{3}{*}{\icono}
    & $\equiv 0.9$
    & 19.73 \\

    &
    &
    &
    & $\equiv 0.7$
    & 20.03 \\

    &
    &
    &
    & $\equiv 0.5$
    & 19.97 \\
    \midrule

    \rowcolor{myLightPurple}
    Full
    & \icoyes
    & \icoyes
    & \icoyes
    & Eq.~(\ref{eq:weight})
    & \textbf{20.61} \\
    \bottomrule
  \end{tabular}
  }

  \caption{\textbf{Ablation Study of Flux-OPD on HealthBench.} $\lambda_k$ denotes the correction strength.}
  \label{tab:abl}
\end{table}

\begin{figure*}[t]
    \centering
    \includegraphics[width=\linewidth]{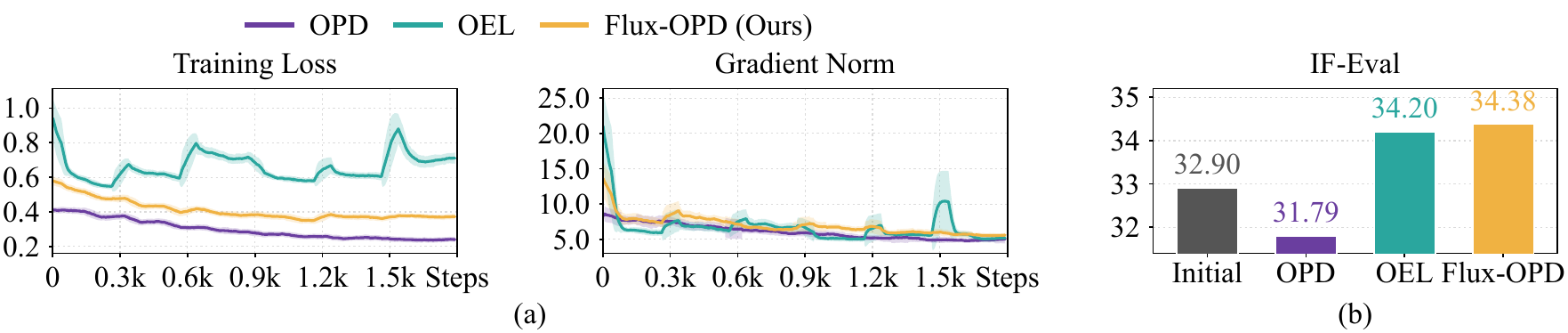}
    \caption{
    \textbf{(a) Training Stability.} The training losses of OPD and Flux-OPD decrease steadily, whereas OEL exhibits loss surges. Flux-OPD also maintains a stable gradient norm.
    \textbf{(b) Out-of-Distribution Performance on IF-Eval.} Flux-OPD achieves the highest prompt-level strict accuracy, outperforming OPD and suggesting better cross-domain generalization.
}
    \label{fig:stability}
\end{figure*}

\subsection{Ablation and Parameter Study}

\paragraph{Effects of Evolving Contexts}
As shown in Tab.~\ref{tab:abl}, full Flux-OPD outperforms V1, the variant w/o evolving contexts, demonstrating that contexts evolving with student performance improve the student more effectively under the same number of training steps.

\paragraph{Effects of Contextual Correction}
Contextual correction strategy corrects the target by combining the context-free teacher and contextual difference signals from context-conditioned teachers.
As shown in Tab.~\ref{tab:abl}, both OEL and V2 have evolving contexts, while V2, equipped with contextual correction, consistently outperforms OEL under all three correction strengths.
This demonstrates the effectiveness of using context-conditioned teachers as contextual difference signals rather than direct distillation targets.

\paragraph{Effects of Contextual Weighting}
Contextual weighting strategy adjusts correction strengths using the conflict term in Eq.(\ref{eq:weight}) as an indicator to downweight conflicting distributions.
As shown in Tab.~\ref{tab:abl}, full Flux-OPD outperforms the V2 variant trained with each of the three static strengths, showing that the weighting strategy guides the student toward a more effective distillation target.

\begin{table}[t]
  \centering
{
\fontsize{9pt}{\baselineskip}\selectfont
\rmfamily
\setlength{\tabcolsep}{1mm}
    \begin{tabular}{lccc}
    \toprule
    \textbf{Method} & $\boldsymbol{\tau}$ & \textbf{Wan2.1-VACE-1.3B}  & \textbf{CogVideoX-2B} \\
    \midrule
    Teacher & - &  81.48      &  77.68   \\
    \midrule
    Student & - &  81.40      & 77.97    \\
    \midrule
    \multirow{2}[0]{*}{Flux-OPD} & 0.1   & \cellcolor{myLightPurple}\textbf{82.25} & 77.62 \\
          & 0.2   & 81.24 & \cellcolor{myLightPurple}\textbf{78.11} \\
    \bottomrule
    \end{tabular}}
\caption{\textbf{Parameter Study of the Conflict Threshold $\tau$.} Results on VBench with a Qwen3-VL-Instruct 8B teacher and 4B student.}
\label{tab:param_conflict}
\end{table}

\paragraph{Parameter Study of Conflict Threshold}
\label{sec:param}

The conflict threshold $\tau$ controls the correction strength according to the conflict among context-conditioned teachers. We use $\tau=0.1$ by default and increase it slightly to $\tau=0.2$ when the initial student outperforms the teacher, allowing contexts to exert greater influence.
For example, in Tab.~\ref{tab:prompt_optimization}, the student underperforms the teacher with Wan2.1-VACE-1.3B but outperforms it with CogVideoX-2B.
As shown in Tab.~\ref{tab:param_conflict}, the higher threshold benefits CogVideoX-2B because contexts provide additional supervision beyond the context-free teacher.

\begin{figure}[t]
    \centering
    \includegraphics[width=0.97\linewidth]{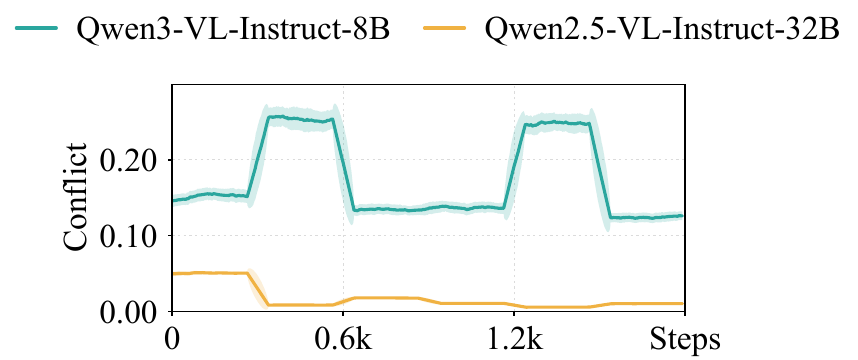}
    \caption{
    \textbf{Conflict Dynamics.} Results with Wan2.1-VACE-1.3B on the prompt optimization task.}
    \label{fig:conflict}
\end{figure}

\paragraph{Parameter Study of Strength Calibration}
We adjust the effective range of correction strengths according to overall context consistency through either scaling or clipping, as described in Sec.~\ref{sec:weighting}.
We use scaling for generally less consistent contexts, setting $\alpha=0.9$ to uniformly reduce their influence.
In contrast, we use clipping for more consistent contexts to control the lower and upper bounds of correction strengths.
As shown in Fig.~\ref{fig:conflict}, the stronger 32B teacher setting exhibits lower conflict values than the 8B teacher setting, indicating generally higher context consistency.
As shown in Tab.~\ref{tab:param_scaling_clipping}, scaling performs best with the 8B teacher, indicating that its contexts benefit from a moderate global reduction in influence.
In contrast, clipping performs best with the stronger 32B teacher, whose more consistent contexts benefit from correction strengths constrained to a specified range.

\begin{table}[t]
  \centering
  {
\fontsize{9pt}{\baselineskip}\selectfont
\rmfamily
\setlength{\tabcolsep}{1mm}
    \begin{tabular}{llccc}
    \toprule
    \textbf{Teacher} & \textbf{Calibration} & $\boldsymbol{\alpha}$ & $\boldsymbol{[\lambda_\text{min},\lambda_\text{max}]}$ & \textbf{Student} \\
    \midrule
    \multirow{3}[1]{*}{\makecell{Qwen3-VL\\-Instruct-8B}}
          &\cellcolor{myLightPurple}Scaling  
          &\cellcolor{myLightPurple}0.9 
          &\cellcolor{myLightPurple}$[0,1]$     
          &\cellcolor{myLightPurple}\textbf{82.25} \\
          
          & -     & 1   & $[0,1]$     & 81.67 \\
          & Clipping & 1   & $[0.1,0.9]$ & 81.61 \\
    \midrule
    \multirow{3}[1]{*}{\makecell{Qwen2.5-VL\\-Instruct-32B}}
          & Scaling  & 0.9 & $[0,1]$     & 81.40 \\
          & -     & 1   & $[0,1]$     & 81.74 \\
          &\cellcolor{myLightPurple}Clipping 
          &\cellcolor{myLightPurple}1   
          &\cellcolor{myLightPurple}$[0.1,0.9]$ 
          &\cellcolor{myLightPurple}\textbf{82.26} \\
    \bottomrule
    \end{tabular}}
    \caption{\textbf{Parameter Study of Strength Calibration.} Results with Wan2.1-VACE-1.3B on VBench.}
  \label{tab:param_scaling_clipping}
\end{table}

\subsection{Discussion}

\paragraph{Stability}
As shown in Fig.~\ref{fig:stability}(a), the training loss of Flux-OPD decreases steadily, similar to OPD, whereas OEL exhibits abrupt loss surges when contexts are updated between iterations. OEL also shows larger fluctuations in the gradient norm near context updates.

\paragraph{Generalization}
We evaluate out-of-distribution instruction following performance on IF-Eval~\cite{zhou2023instructionfollowingevaluationlargelanguage} after training on medical question answering.
As shown in Fig.~\ref{fig:stability}(b), Flux-OPD achieves higher prompt-level strict accuracy than OPD, suggesting better cross-domain generalization.

\section{Conclusion}
We propose Flux-OPD, an OPD paradigm that uses evolving contexts as in-training supervision to capture task preferences in open-ended domains.
We analyze the effect of contexts through a decomposition of the reverse KL objective and address unstable distillation targets and conflicting distributions by anchoring distillation to the context-free teacher, injecting contextual difference signals, and weighting them according to conflicts among teachers.
These results highlight the potential to combine teacher supervision with evolving contexts.

{
    \small
    \bibliographystyle{ieeenat_fullname}
    \bibliography{main}
}

\clearpage
\appendix


\section*{Appendix}

\section{Prompt Template}
All prompts are intentionally concise and require \textbf{minimal human engineering}. They contain no human-crafted domain knowledge, allowing the models to rely on their own capabilities.

\subsection{Context Extraction Prompt}

Figs.~\ref{fig:prompt_context_extraction_pe} and~\ref{fig:prompt_context_extraction_medical} show the context extraction prompts used by OPCD, OEL, and Flux-OPD for the prompt optimization and medical question answering tasks. These prompts extract new experience items beyond the existing ones.

\subsection{Context Distillation Prompt}

Figs.~\ref{fig:prompt_context_distillation_pe} and~\ref{fig:prompt_context_distillation_mqa} show the in-context learning prompts used by context-conditioned teachers in OPCD, OEL, and Flux-OPD for the prompt optimization and medical question answering tasks.

\subsection{System Prompt}
Fig.~\ref{fig:prompt_system_pe} shows the system prompt used for offline SFT data construction, student training, and evaluation in the prompt optimization task. We do not use an explicit system prompt for the medical question answering task.

\section{Context Example}
Figs.~\ref{fig:context_pe} and~\ref{fig:context_mqa} show context examples from OPCD, OEL, and Flux-OPD for the prompt optimization and medical question answering tasks.

\section{Additional Training Details}
\subsection{Hyperparameters}
\paragraph{General Hyperparameters}
For each training paradigm, we use a batch size of $64$ and a learning rate of $2\times10^{-6}$. SFT runs for $5$ epochs, while the other paradigms run for $10$ epochs.
For computational efficiency, the implemented distillation loss retains the top-$64$ target logits and renormalizes them over the selected tokens. All experiments are conducted on 8 NVIDIA A800 GPUs.

\paragraph{Hyperparameters of Flux-OPD}
We list the hyperparameters of Flux-OPD in Tab.~\ref{tab:supp_hyper}. The corresponding evaluation scores are reported in Tabs.~1, 2, and 3 of the main paper.

\subsection{Training Pseudocode of Flux-OPD}
The full training procedure of Flux-OPD is summarized in Algorithm~\ref{alg:flux_opd}.
Flux-OPD performs a single continuous training run over multiple iterations.
In each iteration, it first extracts contexts from trajectories generated by the current student and then distills contextual difference signals into a context-free teacher anchor.
The correction strength is adjusted according to the conflict among context-conditioned teacher distributions.

\subsection{Conflict Dynamics}
We use the conflict metric to monitor distributional disagreement among context-conditioned teachers during training.
For each response token position, the teacher is queried under multiple sampled contexts, producing multiple context-conditioned token distributions.
These distributions are combined through a geometric mean in log-probability space.
A larger conflict value indicates lower distributional overlap and stronger disagreement among context-conditioned teachers.
We report the average conflict over valid response tokens at each training step.

As shown in Fig.~\ref{fig:supp_conflict}, the two prompt optimization settings with the Qwen3-VL-Instruct 8B teacher exhibit substantially higher conflict values, indicating lower overlap among context-conditioned teacher distributions and thus lower overall context consistency.
In contrast, the two settings with the stronger Qwen2.5-VL-Instruct 32B teacher maintain markedly lower conflict values.
This difference is consistent with a stronger teacher producing more stable token preferences across sampled contexts and, consequently, more consistent context-conditioned distributions.

The medical question answering setting also uses an 8B teacher but exhibits low conflict values comparable to those of the 32B prompt optimization settings. This may be attributed to the use of rubrics as optional supplements during context extraction, which provide a standard reference and may improve the consistency of the resulting contexts.

These results suggest that overall context consistency depends not only on teacher capability but also on the optional supplements available during context extraction.

\section{Additional Evaluation Details}
For the prompt optimization task, we evaluate the 10 VBench dimensions~\cite{huang2024vbench} reported in prior works~\cite{cheng2025vpo,ji2025prompt}.
The quality dimensions include Subject Consistency, Background Consistency, Motion Smoothness, Dynamic Degree, Aesthetic Quality, and Image Quality, while the semantic dimensions include Multiple Objects, Human Action, Scene, and Appearance Style. These dimensions cover 523 prompts in total.
We evaluate all quality and alignment dimensions of the full Video-Bench~\cite{han2025video}, covering 419 prompts.

\begin{figure*}[t]
\centering
\includegraphics[width=1\linewidth]{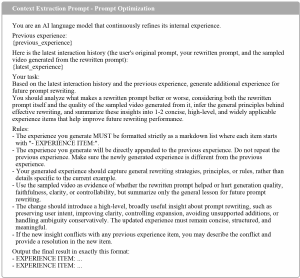}
\caption{\textbf{Prompt for Context Extraction in the Prompt Optimization Task.}}
\label{fig:prompt_context_extraction_pe}
\end{figure*}

\begin{figure*}[t]
\centering
\includegraphics[width=1\linewidth]{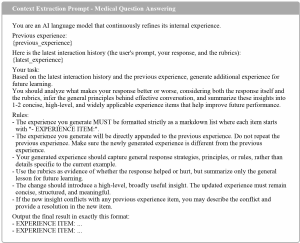}
\caption{\textbf{Prompt for Context Extraction in the Medical Question Answering Task.}}
\label{fig:prompt_context_extraction_medical}
\end{figure*}

\begin{figure*}[t]
\centering
\includegraphics[width=1\linewidth]{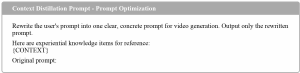}
\caption{\textbf{Prompt for Context Distillation in the Prompt Optimization Task.}}
\label{fig:prompt_context_distillation_pe}
\end{figure*}

\begin{figure*}[t]
\centering
\includegraphics[width=1\linewidth]{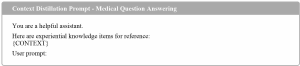}
\caption{\textbf{Prompt for Context Distillation in the Medical Question Answering Task.}}
\label{fig:prompt_context_distillation_mqa}
\end{figure*}

\begin{figure*}[t]
\centering
\includegraphics[width=1\linewidth]{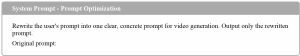}
\caption{\textbf{System Prompt for the Prompt Optimization Task.}}
\label{fig:prompt_system_pe}
\end{figure*}

\begin{figure*}[t]
\centering
\includegraphics[width=1\linewidth]{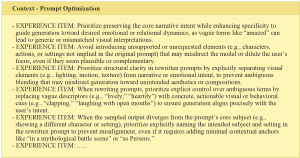}
\caption{\textbf{Context Example for the Prompt Optimization Task.}}
\label{fig:context_pe}
\end{figure*}

\begin{figure*}[t]
\centering
\includegraphics[width=1\linewidth]{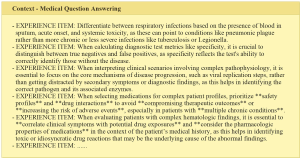}
\caption{\textbf{Context Example for the Medical Question Answering Task.}}
\label{fig:context_mqa}
\end{figure*}

\begin{algorithm*}[t]
\caption{\textbf{Flux-OPD Training}}
\label{alg:flux_opd}
\begin{algorithmic}[1]
\REQUIRE
Training prompts $\mathcal{X}$, student $\pi_{\theta}$,
frozen teacher $q_\phi$, environment $\mathcal{E}$,
context extraction prompt $P_{\rm ext}$,
context teacher prompt $P_{\rm ctx}$,
iterations $K$, rollouts $N$, extraction seeds $M$,
distillation contexts $R$, scaling factor $\alpha$,
conflict threshold $\tau$,
clipping bounds $[\lambda_{\min},\lambda_{\max}]$,
and top-$B$ size.
\ENSURE Trained student $\pi_{\theta}$.

\FOR{$k=1,\ldots,K$}
    \STATE \textbf{Context extraction.}
    \STATE Sample $\{x_i\}_{i=1}^{N}$ from $\mathcal{X}$ and collect trajectories
    \[
    \xi_i=
    \left(
    x_i,\,
    \hat{y}_i,\,
    \mathcal{E}(x_i,\hat{y}_i)
    \right),
    \qquad
    \hat{y}_i\sim\pi_\theta(\cdot\mid x_i).
    \]
    \FOR{$m=1,\ldots,M$}
        \STATE Sequentially extract context $c_{k,m}$ from
        $\{\xi_i\}_{i=1}^{N}$ using $q_\phi$ and $P_{\rm ext}$ with seed $m$.
    \ENDFOR
    \STATE Construct the context pool
    $\mathcal{C}_k=\{c_{k,1},\ldots,c_{k,M}\}$.

    \STATE \textbf{Context distillation.}
    \FOR{each mini-batch $\mathcal{B}$ in iteration $k$}
        \STATE Sample $\{c_r\}_{r=1}^{R}$ from $\mathcal{C}_k$.
        \STATE Compute the context-free teacher distribution
        $q_0=q_\phi(\cdot\mid\mathcal{B})$.
        \STATE Compute context-conditioned teacher distributions
        \[
        q_r=q_\phi(\cdot\mid\mathcal{B},P_{\rm ctx}(c_r)),
        \qquad r=1,\ldots,R.
        \]
        \STATE Compute the unnormalized geometric mean and conflict:
        \[
        \tilde{q}_{\rm geo}
        =
        \exp\left(
        \frac{1}{R}\sum_{r=1}^{R}\log q_r
        \right),
        \qquad
        Z=\sum_v\tilde{q}_{\rm geo}(v),
        \qquad
        \delta=-\log Z.
        \]
        \STATE Compute the contextual weight:
        \[
        \lambda
        =
        \alpha\,
        \operatorname{clip}
        \left(
        1-\frac{\delta}{\tau},
        \lambda_{\min},
        \lambda_{\max}
        \right).
        \]
        \STATE Construct the distillation target:
        \[
        q_{\rm target}
        =
        \operatorname{softmax}
        \left(
        \log q_0
        +
        \lambda
        \left(
        \log\tilde{q}_{\rm geo}-\log q_0
        \right)
        \right).
        \]
        \STATE Retain and renormalize the top-$B$ target probabilities.
        \STATE Update $\theta$ using
        \[
        \mathcal{L}_{\rm Flux\text{-}OPD}
        =
        D_{\rm KL}
        \left(
        \pi_\theta(\cdot\mid h_t)
        \,\|\,q_{\rm target}(\cdot\mid h_t)
        \right).
        \]
    \ENDFOR
\ENDFOR
\RETURN $\pi_\theta$.
\end{algorithmic}
\end{algorithm*}

\begin{table*}[t]
\centering
{
\fontsize{9pt}{\baselineskip}\selectfont
\rmfamily

\begin{tabular}{ccccc}
\toprule
\textbf{Task} & \textbf{Student \& Teacher} & \textbf{Downstream Model} & $\boldsymbol{\tau}$ & \textbf{Calibration} \\
\midrule
\multirow{4}[4]{*}{Prompt Optimization} & \multirow{2}[2]{*}{\makecell{Qwen3-VL-Instruct\\8B$\rightarrow$4B}} & Wan2.1-VACE-1.3B & 0.1   & Scaling, $\alpha=0.9,[\lambda_\text{min},\lambda_\text{max}]=[0,1]$ \\

\cmidrule{3-5}
      &       & CogVideoX-2B & 0.2   & Scaling, $\alpha=0.9,[\lambda_\text{min},\lambda_\text{max}]=[0,1]$ \\

\cmidrule{2-5}          & \multirow{2}[2]{*}{\makecell{Qwen2.5-VL-Instruct\\32B$\rightarrow$7B}} & Wan2.1-VACE-1.3B & 0.1   & Clipping, $\alpha=1,[\lambda_\text{min},\lambda_\text{max}]=[0.1,0.9]$ \\

\cmidrule{3-5}
      &       & CogVideoX-2B & 0.1   & Clipping, $\alpha=1,[\lambda_\text{min},\lambda_\text{max}]=[0.3,0.7]$ \\
\midrule
Medical Question Answering & Qwen3 8B$\rightarrow$1.7B & -     & 0.1   & Clipping, $\alpha=1,[\lambda_\text{min},\lambda_\text{max}]=[0.7,0.9]$ \\
\bottomrule
\end{tabular}}
\caption{\textbf{Additional Hyperparameters of Flux-OPD.}}
\label{tab:supp_hyper}
\end{table*}

\begin{figure*}[t]
    \centering
    \includegraphics[width=0.7\linewidth]{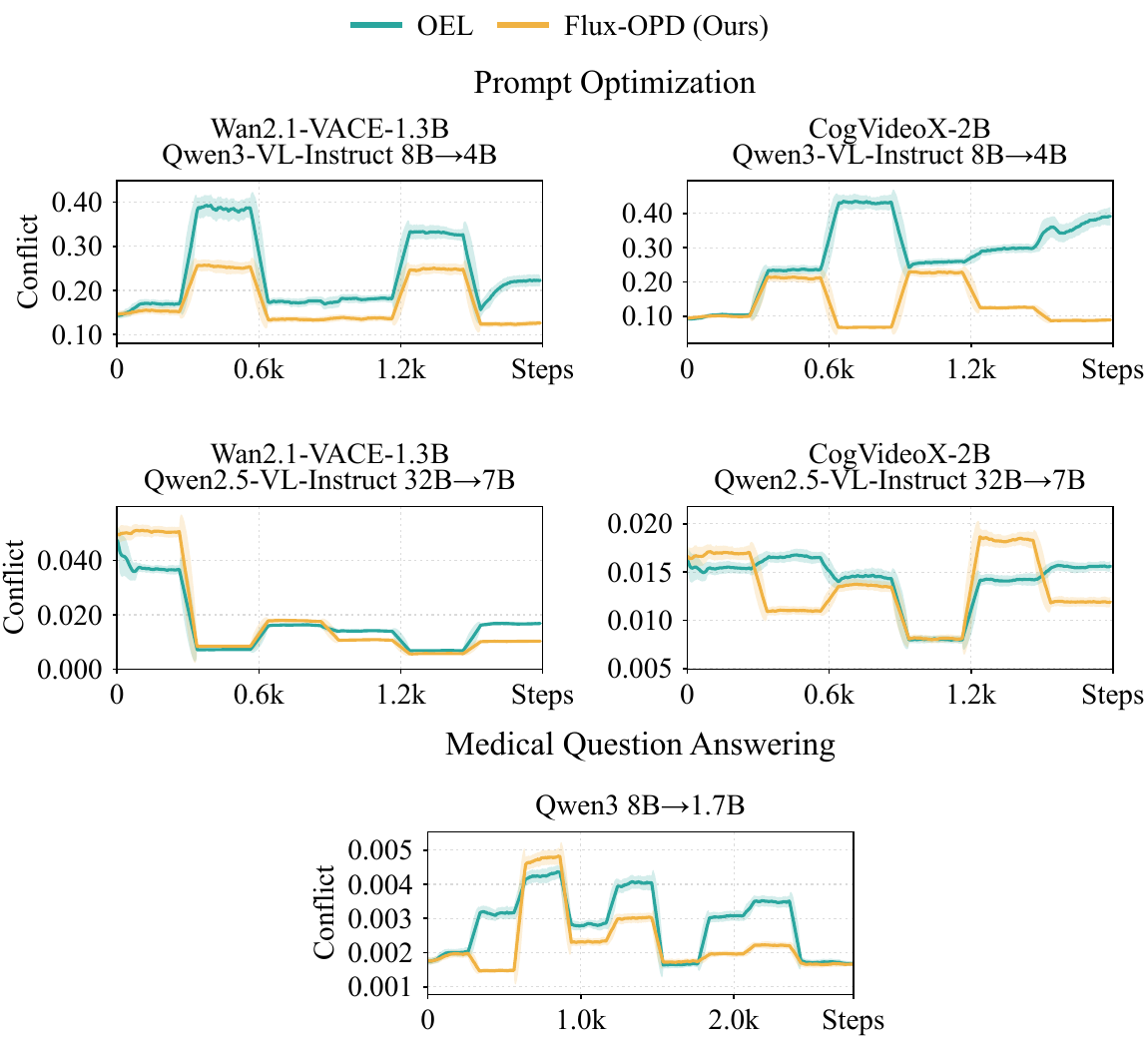}
    \caption{
\textbf{Conflict Dynamics.}
The prompt optimization settings with the 8B teacher exhibit higher conflict values, whereas those with the stronger 32B teacher show higher overall context consistency.
Despite using an 8B teacher, medical question answering also maintains low conflict, potentially because task-specific rubrics serve as optional supplements during context extraction.
}
    \label{fig:supp_conflict}
\end{figure*}

\end{document}